\DeclareMathOperator*{\argmax}{arg\,max}
\newtheorem{theorem}{Theorem}
\newtheorem{definition}{Definition}
\title{Bounded Risk-Sensitive Markov Games: Forward Policy Design and Inverse Reward Learning with Iterative Reasoning and Cumulative Prospect Theory}
\author{
 Ran Tian \textsuperscript{ \footnotesize \textasteriskcentered \normalsize},
 Liting Sun \footnote{First two authors contributed equally to this work.},
 Masayoshi Tomizuka\\
}
\title{My Publication Title --- Single Author}
\author {
    % Author
    Author Name \\
}
\begin{document}

\maketitle

\begin{abstract}
Classical game-theoretic approaches for multi-agent systems in both the forward policy design problem and the inverse reward learning problem often make strong rationality assumptions: agents perfectly maximize expected utilities under uncertainties. Such assumptions, however, substantially mismatch with observed human behaviors such as satisficing with sub-optimal, risk-seeking, and loss-aversion decisions. Drawing on iterative reasoning models and cumulative prospect theory, we propose a new game-theoretic framework, bounded risk-sensitive Markov Game (BRSMG), that captures two aspects of realistic human behaviors: bounded intelligence and risk-sensitivity. General solutions to both the forward policy design problem and the inverse reward learning problem are provided with theoretical analysis and simulation verification. We validate the proposed forward policy design algorithm and the inverse reward learning algorithm in a two-player navigation scenario. The results show that agents demonstrate bounded-intelligence, risk-averse and risk-seeking behaviors in our framework. Moreover, in the inverse reward learning task, the proposed bounded risk-sensitive inverse learning algorithm outperforms a baseline risk-neutral inverse learning algorithm by effectively learning not only more accurate reward values but also the intelligence levels and the risk-measure parameters of agents from demonstrations.
\end{abstract}

%\vspace{-0.2cm}
\section{Introduction} \label{sec: intro}
%\vspace{-0.3cm}

% Markov game is important
Markov Game (MG), as an approach to modeling interactions and decision-making processes in multi-agent systems, has been employed in many domains such as economics \cite{amir2003stochastic}, games \cite{silver2017mastering}, and human-robot/machine interaction \cite{bu2008comprehensive}. In classical MGs, agents are commonly assumed to be perfectly rational when computing their policies. For instance, in a two-player MG, agent $1$ is assumed to make decisions based on his/her belief in agent $2$'s behavioral model in which agent $2$ is also assumed to behave according to his/her belief in agent $1$'s model $\dots$ and both agents are maximizing their expected rewards based on such infinite levels of mutual beliefs. If the beliefs match the actual models, perfect Markov strategies of all agents may be found by solving the Markov-perfect equilibrium of the game where a Nash equilibrium is reached. Under such assumptions, we can either solve for humans' optimal policies with handcrafted rewards (forward policy design) or learn humans' rewards from demonstrations (inverse reward learning).

% The limitations in the current formulations
However, in real life, humans often significantly deviate from such ``perfectly rational'' assumptions from two major aspects \cite{goeree2001ten}. First, mounting evidence has shown that rather than spending a great amount of effort hunting for the best response, humans often choose actions that are satisfying (i.e., actions that are above their pre-defined thresholds according to certain criteria) and relatively quick and easy to find. Simon \cite{simon1976substantive} formulated such a cognitive characteristic as bounded rationality. Among the many developed behavioral models that capture bounded rationality, iterative reasoning models from behavioral game theory \cite{camerer2011behavioral} are some of the most prominent paradigms. These models do not assume humans perform infinite layers of strategic thinking during interactions but model humans as agents with finite levels of intelligence (bounded rationality). Second, instead of optimizing risk-neutral rewards, humans demonstrate a strong tendency towards risk-sensitive measures when evaluating the outcomes of their actions. They are risk-seeking in terms of gains and risk-averse for losses. Such deviations make it difficult to model realistic human behaviors using classical MGs.

% Motivations
In this work, we aim to establish a new game-theoretic framework (BRSMG) that captures the two aspects of realistic human behaviors discussed above. The incorporation of bounded rationality and risk-sensitivity in classical MGs requires revisiting fundamental concepts in both the forward policy design and the inverse reward learning problem. Standard value iteration and inverse learning algorithms for traditional MGs do not hold any more, and new algorithms should be established to reflect the impact of bounded intelligence and risk sensitivity. %Hence, our goal is to develop general solutions to both the forward policy design problem and the inverse reward learning problem in the BRSMG framework with both theoretical analysis and simulation verification.}

% Our work
More specifically, in the forward policy design problem, we model humans' bounded intelligence via an instantiation of iterative reasoning models and model the influence of humans' risk sensitivity via cumulative prospect theory (CPT) \cite{tversky1992advances}. In the inverse reward learning problem, we develop a bounded risk-sensitive inverse learning algorithm that can recover not only the nominal rewards of agents but also their intelligence levels and risk-measure parameters from demonstrations. \textit{To our best knowledge, our work is the first to incorporate both bounded rationality and risk-sensitivity in both the forward problem and the inverse problem of general-sum MGs.}

% Contributions
\noindent
\textbf{Contributions.} In summary, our contributions are threefold:
\begin{enumerate}

\item We propose a novel game-theoretic framework (BRSMG) that captures bounded rationality and risk-sensitivity in humans' reasoning processes.

\item The proposed framework makes the first attempt to establish a bridge between inverse reward learning and risk-sensitive iterative reasoning models.

\item In contrast to previous risk-neutral reward learning algorithms that learn humans' rewards under equilibrium solution concepts, we exploit an alternative paradigm based on non-equilibrium solution concepts and offer a solution that simultaneously learns humans' rewards, intelligence levels, and risk-sensitive measure parameters. Therefore, our solution provides an interpretable and heterogeneous human behavioral model, which is of critical importance for the development of human-centered robots such as autonomous vehicles.
\end{enumerate}

%\vspace{-0.2cm}
\section{Related Work}\label{sec: related work}
\noindent
\textbf{Bounded rationality.} The influence of bounded rationality in forward policy design problems has been studied in both single-agent and multi-agent settings. One group of studies formulates such influence by introducing additional computational costs to agents' actions \cite{ben2007approach, halpern2008beyond, halpern2015algorithmic}. Another group focuses on models that can explicitly capture the bounded reasoning processes of humans. Examples include the Boltzmann rationality model \cite{von2007theory}, the quantal response equilibrium solution (QRE) \cite{mckelvey1995quantal}, and various iterative reasoning models \cite{costa2001cognition, camerer2004cognitive, stahl1994experimental}. The Boltzmann model and the QRE model formulate irrational behaviors of humans via sub-optimality, while iterative reasoning models emphasize more on the bounded reasoning depth. Instead of assuming humans perform infinite levels of strategic reasoning, iterative reasoning models only allow for a finite number of strategic reasoning. Iterative reasoning models have been exploited for modeling human behaviors in many application domains, including normal-form zero-sum games \cite{2020Beating}, aerospace \cite{yildiz2014predicting, 9147737}, cyber-physical security \cite{kanellopoulos2019non}, and human-robot interaction \cite{Li2018highway, tian2020game}. It is shown in \cite{wright2014level} that compared to QRE, iterative reasoning models can achieve better performance in predicting human behaviors in simultaneous move games. More specifically, \cite{WRIGHT201716} suggests that the quantal level-$k$ model is the state-of-the-art among various iterative reasoning models.

\noindent
\textbf{Risk measure.} Many risk measures have been proposed to evaluate humans' decisions. Beyond expectation, value-at-risk (VaR) and conditional value-at-Risk (CVaR) \cite{pflug2000some} are two well-adopted risk measures. In addition, the cumulative prospect theory (CPT) \cite{tversky1992advances} formulates a model that can well explain a substantial amount of human risk-sensitive behaviors. In the light of those risk measures, many risk-aware decision-making and reward learning algorithms have been proposed in both single-agent setting \cite{lin2013dynamic, chow2015risk, Mazumdarrisk, jie2018stochastic, ratliff2019inverse, kwon2020humans} and multi-agents cases \cite{sun2019interpretable} with a Stackelberg Game assumption.

\noindent
\textbf{Inverse reward learning in games.} The inverse reward learning problem in games has attracted great attention from researchers, starting from simplified open-loop game formulations \cite{sadigh2016planning, sun2018courteous} to closed-loop games \cite{yu2019multi, gruver2020multi}. The concept of QRE was first adopted by \cite{yu2019multi} to extend the maximum-entropy inverse reinforcement learning algorithm \cite{ziebart2008maximum} in multi-agent settings. \cite{gruver2020multi} further extended the idea for better efficiency and scalability by introducing a latent space in the reward network. Though \cite{wright2014level} suggested that iterative reasoning models can predict human behaviors more accurately in simultaneous move games compared with QRE, the multi-agent inverse reward learning problem with iterative reasoning models and risk sensitive measure has never been addressed. In this work, we propose the BRSMG framework to fill the gap.
%\vspace{-0.1cm}
\section{Preliminaries}
\subsection{Classical Markov Game} \label{sec: Markov Game}

In this work, we consider two-player Markov Games. We denote a two-player MG as $\mathcal{G} \triangleq \langle\mathcal{P}, \mathcal{S}, \mathcal{A}, \mathcal{R}, \mathcal{T}, \Tilde{\gamma}\rangle$, where $\mathcal{P} {=} \{1, 2\}$ is the set of agents in the game; $\mathcal{S} {=} S^1\times S^2$ and $\mathcal{A} {=} A^1 \times A^2$ are, respectively, the joint state and action spaces of the two agents; $\mathcal{R} {=} (R^1, R^2)$ is the set of agents' one-step reward functions with $R^{i}: \mathcal{S} \times A^i \times A^{-i} \to \mathbb{R}$ ($-i = \mathcal{P} \setminus \{i\}$ represents the opponent of agent $i$); $\mathcal{T}: \mathcal{S} \times \mathcal{A} \to \mathcal{S}$ represents the state transition of the game (we consider deterministic state transitions in this paper); and $\tilde{\gamma}$ is the reward discount factor.% that shared by both agents.

We let $\pi^i: \mathcal{S}  \to A^i$ denote a deterministic policy of agent $i$. At step $t$, given the current state $s_t$, each agent tries to maximize its expected total discounted reward under uncertainties in its opponent's responses. Namely, the optimal policy $\pi^{*,i}$ is given by $\pi^{*,i}{=}\arg\max_{\pi^i} V^{i,\pi^i}(s_t)$, where \small$ V^{i,\pi^i}(s_t)=\mathbb{E}_{\pi^{-i}} \Big[ \sum_{\tau = 0} ^ {\infty} \tilde{\gamma}^{\tau} R^{i} (s_{t+\tau},a^{i}_{t+\tau},a^{-i}_{t+\tau}) \Big]$ \normalsize represents the {value of} $s_t$, i.e., the expected total reward collected by $i$ starting from $s_t$ under policy $\pi^i$. %\tr{The expectation is taken with respect to the possible actions of agent $-i$}. 
The notations $a^{-i}_{t+\tau}$ and $s_{t+\tau}$, respectively, represent the predicted future action of $-i$ and state of the game at step $t+\tau$. In the MPE, both agents achieve their optimal policies. Due to the mutual influence between the value functions of both agents, finding the MPE is normally NP-hard.

\subsection{Quantal Level-k Model}
\label{sec: CHT}

The quantal level-$k$ model is one of the most effective iterative reasoning models in predicting human behaviors in simultaneous move games \cite{WRIGHT201716}. It assumes that each human agent has an \textit{intelligence level} that defines his/her reasoning capability. More specifically, the level-$0$ agents do not perform any strategic reasoning, while quantal level-$k$ ($k\ge1$) agents make strategic decisions by treating other agents as quantal level-$(k{-}1)$ agents. As shown in \cref{fig: intro_figure}, the orange agent is a level-$1$ agent who believes that the blue agent is a level-$0$ agent. Meanwhile, the blue agent is actually a level-$2$ agent who treats the orange agent as a level-$1$ agent when making decisions. The quantal level-$k$ model has therefore reduced the complex circular strategic thinking in classical MGs to finite levels of iterative optimizations. On the basis of an anchoring level-$0$ policy, the quantal level-$k$ policies of all agents can be defined for all $k = 1,\dots,k_{\text{max}}$ through a sequential and iterative process.

%\vspace{-0.1cm}
\subsection{Cumulative Prospect Theory}\label{sec: CPT}
The cumulative prospect theory (CPT) is a non-expected utility measure that describes the risk-sensitivity of humans' decision-making processes \cite{kahneman2013prospect}. It can explain many systematic biases of human behaviors that deviate from risk-neutral decisions, such as risk-avoiding/seeking and framing effects.

%\small
\begin{definition}[CPT value]
	For a discrete random variable $X$ satisfying $\sum_{i{=}{-}m}^n\mathbb{P}(X{=}x_i){=}1$, $x_i{\ge} x^0$ for $i{=}0,{\cdots}, n$, and $x_i{<}x^0$ for $i{=}{-}m,{\cdots}, {-}1$, then the CPT value of $X$ is defined as
\small
\begin{subequations}
\label{equ:CPT_measure_discrete}
\begin{align}
& \mathbb{CPT}(X) {=}{\sum}_{i{=}0}^{n}\tilde{\rho}^+\left(\mathbb{P}(X{=}x_i)\right)u^+(X-x^0)\nonumber\\
&\hspace{+1cm} -{\sum}_{i{=}{-}m}^{{-}1}\tilde{\rho}^-\left(\mathbb{P}(X{=}x_i)\right)u^-(X-x^0),\\
& \tilde{\rho}^+\left(\mathbb{P}(X{=}x_i)\right) =\big[w^+\left({\sum}_{j{=}i}^{n}\mathbb{P}(X{=}x_j)\right)\nonumber\\
& \hspace{+2.5cm}{-}w^+\left({\sum}_{j{=}i{+}1}^{n}\mathbb{P}(X{=}x_j)\right)\big],\\
& \tilde{\rho}^-\left(\mathbb{P}(X{=}x_i)\right) =\big[w^-\left({\sum}_{j{=}{-}m}^{i}\mathbb{P}(X{=}x_j)\right)\nonumber\\
&\hspace{+2.5cm}{-}w^-\left({\sum}_{j{=}{-}m}^{i{-}1}\mathbb{P}(X{=}x_j)\right)\big].
\end{align}
\end{subequations}
\end{definition}
\normalsize
The functions $w^+: [0,1]{\to} [0,1]$ and $w^-: [0,1]{\to} [0,1]$ are two continuous non-decreasing functions and are referred as the probability weighting functions. They describe humans' desire to deflate high probabilities and inflate low probabilities. The two functions $u^+: \mathbb{R}^+{\to}\mathbb{R}^+$ and $u^-: \mathbb{R}^- {\to} \mathbb{R}^+$ are concave utility functions which are, respectively, monotonically non-decreasing and non-increasing. The notation $x^{0}$ denotes a reference point that separates the value $X$ into gains ($X{\geq} x^0$) and losses ($X{<}x^0$). %Handling gains and losses separately is a key feature of the CPT model and it captures the different preferences of humans towards gains and losses. Moreover, the slope of $u^-$ is usually larger than that of $u^+$ to show that humans weigh losses more than gains.
Without loss of generality, we set $x^{0} = 0$ and omit $x^{0}$ in the rest of this paper. %Note that when both the probability weighting functions and the utility functions take the identity function, i.e., $w^+{=}w^-{=}u^+{=}u^-{=}\mathbbm{1}$, the CPT value in (\ref{equ:CPT_measure_continuous}) and (\ref{equ:CPT_measure_discrete}) reduces to $\mathbb{E}[X^+]{-}\mathbb{E}[X^-]$, showing the connection to the expected value.% i.e., the risk-neutral performance measure.

Many experimental studies have shown that representative functional forms for $u$ and $w$ are: $u^+(x) {=} (x)^\alpha$ if $x {\geq} 0$, and $u^+(x) {=} 0$ otherwise; $u^-(x)  {=}\lambda(-x)^\beta$ if $x {<} 0$, and $u^-(x)  {=} 0$ otherwise; $w^+(p) {=} \frac{p^\gamma}{\left(p^\gamma+(1-p)^\gamma\right)^{1/\gamma}}$ and $w^-(p) = \frac{p^\delta}{\left(p^\delta+(1-p)^\delta\right)^{1/\delta}}$. The parameters $\alpha, \beta, \gamma, \delta{\in}(0,1]$ are model parameters. We adopt these two representative functions in this paper. %, and focus on the discrete-domain of CPT value in (\ref{equ:CPT_measure_discrete}).
Section A of the supplementary material illustrates the probability weighting functions and the utility functions. 

\begin{figure}
\centering
\includegraphics[trim=0.2cm 0.8cm 0.2cm 0cm,clip,width=0.98\linewidth]{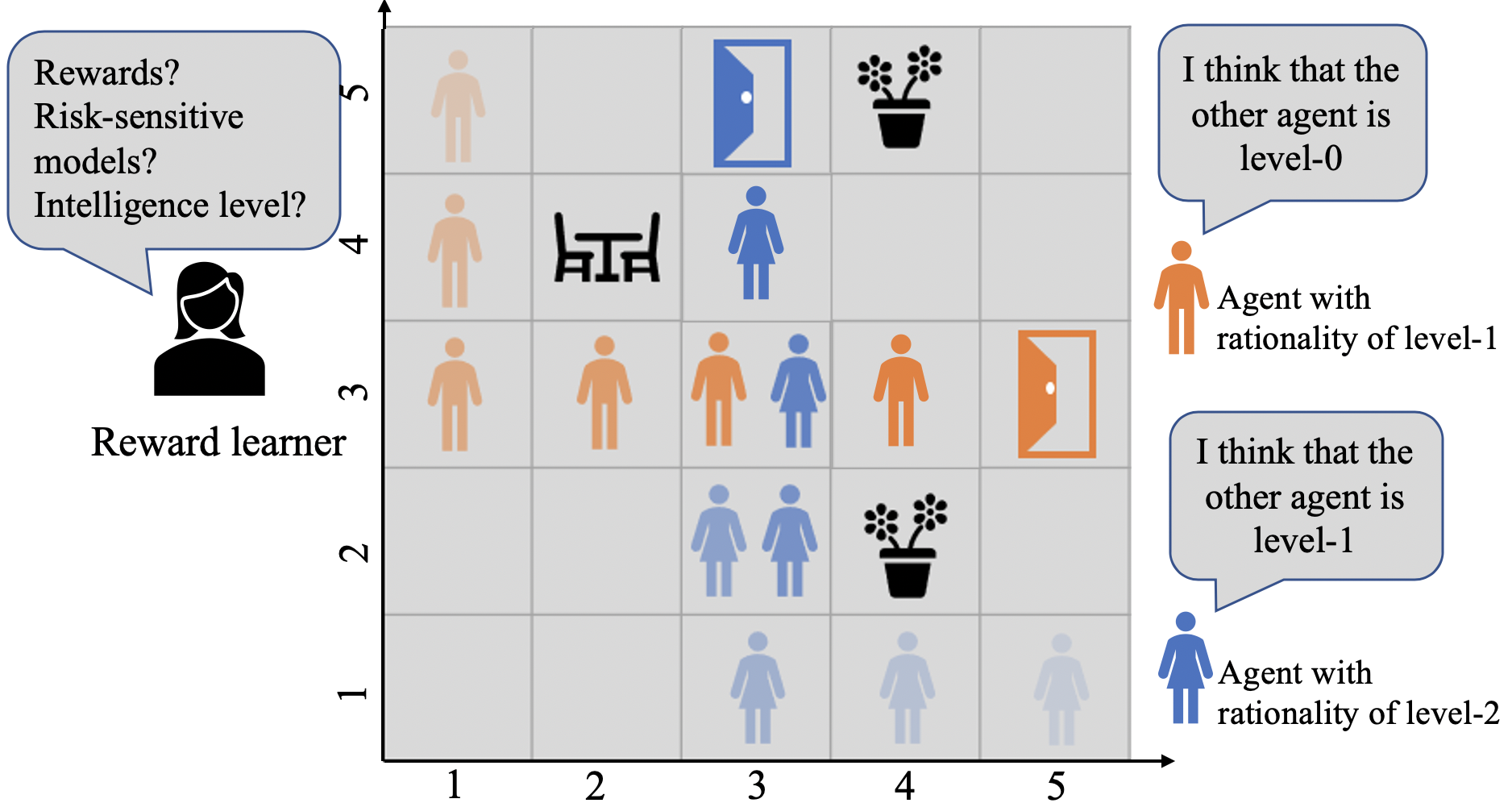}
%\vspace{-0.2cm}
\caption{ Modeling interactions between humans as a bounded risk-sensitive Markov Game: two human agents aim to exit the room through specified doors without collisions with obstacles and each other. We aim to answer two questions: 1) assuming both humans have bounded intelligence levels and risk-sensitive performance measures, how will their optimal policies differ from those in classical MGs? and 2) how to recover the rewards, intelligence levels, and risk-sensitivity parameters from their demonstrations?
\normalsize}
%\vspace{-0.0cm}
\label{fig: intro_figure}
\end{figure}
\section{Bounded Risk-Sensitive Markov Game}
\label{sec: BRSMG}

In this section, we investigate agents' policies in a new general-sum two-player MG, i.e., the bounded risk-sensitive MG (BRSMG). In particular, agents in BRSMG are bounded-rational with risk-sensitive performance measures. 

\subsection{Bounded Risk-Sensitive Policies}\label{sec: BRSMG policy}

According to the quantal level-$k$ model described in \cref{sec: CHT}, a quantal level-$k$ agent ($k{\in}\mathbb{N}^+$) assumes its opponent agent is quantal level-$(k-1)$ agent, predicts its quantal level-$(k-1)$ policy, and quantally best responds to the quantal level-$(k-1)$ policy. Such an iterative reasoning process traces back to the quantal level-$0$ policy, which is normally a pure responsive policy. Therefore, on the basis of a selected quantal level-$0$ policy\footnote{Note that the selection of quantal level-$0$ policy can be different according to applications. We use the notation $\pi^{0}$ to represent a generic quantal level-$0$ policy and describe the exemplary quantal level-$0$ policy in \cref{sec: simulation}.}, we can sequentially and iteratively solve for the closed-loop quantal level-$k$ policies for every agent and every $k = 1,\dots,k_{max}$.

If we strictly consider positive rewards and set $x^{0} {=} 0$, we have the CPT value in \eqref{equ:CPT_measure_discrete} reduced to a form that includes only $u^{+}:\mathbb{R}^+{\rightarrow} \mathbb{R}^+$ and $\tilde{\rho}^+:[0,1]{\rightarrow}[0,1]$. In \cite{lin2013stochastic}, it is proved that under such condition, the $\mathbb{CPT}$ measure is a reward transition mapping (Theorem 3.2). Thus, following Section 2 in \cite{lin2013stochastic}, given current state $s_t$, the discounted future cumulative prospects that a risk-sensitive quantal level-$k$ agent $i$ tries to maximize can be expressed as:

%\footnotesize
%\vspace{-0.15cm}
\begin{align}\label{equ: V-risk-aware-inifite-RHO}
\max_{\pi^{i,k}}J_{\pi^{i,k}}(s_t){=}\max_{\pi^{i,k}}\mathbb{CPT}_{\pi^{*,-i,k-1}}\Big [R^{i}(s_{t},a^{i}_{t},a^{-i}_{t}) {+} \cdots \nonumber\\
+ \tilde{\gamma}^{\tau} \mathbb{CPT}_{\pi^{*,-i,k-1}}\Big[R^{i}(s_{t+\tau},a^{i}_{t+\tau},a^{-i}_{t+\tau}) + \dots \Big]\Big],
\end{align}
\normalsize
where $\pi^{*,-i,k-1}{:} \mathcal{S}{\times} A^{-i} {\to} [0,1]$ denotes the optimal risk-sensitive quantal level-$(k{-}1)$ policy of agent $-i$ whose level of intelligence is believed to be $(k{-}1)$ from agent $i$'s perspective. The action $a^{-i}_{t+\tau}$ denotes the predicted action of agent $-i$ sampled from $\pi^{*,-i,k-1}$ at time step $t{+}\tau$.

We define $V^{*,i,k}(s_t){\triangleq} J_{\pi^{*,i,k}}(s_t)$ as the optimal CPT value that $i$ could collect following $\pi^{*, i, k}$ starting from $s_t$. Then, the optimal CPT value at any $s{\in}\mathcal{S}$ satisfies \cite{ruszczynski2010risk, lin2013dynamic}:

\begin{align}\label{equ: risk-aware-level-k-dp-intergral}
V^{*,i,k}(s) & = \max_{a^i \in A^i} \mathbb{CPT}_{\pi^{*,-i,k-1}} \big[ R^i(s,a^i,a^{-i}) + \nonumber\\
&\tilde{\gamma}V^{*,i,k}(s')\big], s' = \mathcal{T}_{a^{-i}\sim\pi^{*,-i,k-1}}(s,a^i,a^{-i}).
\end{align}
\normalsize
We also define the optimal CPT Q-value of agent $i$ as $Q^{*,i,k}(s, a^i){=}\mathbb{CPT}_{\pi^{*,-i,k-1}} \big[ R^i(s,a^i,a^{-i}) {+} \tilde{\gamma}V^{*,i,k}(s')\big]$.
% Apparently, we have $V^{*,i,k}(s) {=}\max_{a^i}Q^{*,i,k}(s, a^i)$, similar to classical MGs.
Based on the Boltzmann model \cite{von2007theory}, we re-construct $\pi^{*,i,k}$ as

\begin{align}\label{equ: risk-aware-level-k-stochastic-policy}
\pi^{*,i,k}(s,a^i) = \frac{\exp\big(\beta{Q}^{*,i,k}(a^i,s)\big)}{\sum_{a'\in A^i}\exp\big(\beta{Q}^{*,i,k}(a',s)\big)},
\end{align}
\normalsize
where $\beta{\ge}0$ defines the level of the agents conforming to the optimal strategy. Without loss of generality, we set $\beta{=}1$. By iteratively solving (\ref{equ: risk-aware-level-k-dp-intergral}), the optimal quantal level-$k$ risk-sensitive policy $\pi^{*,i,k}$ for every $i{\in}\mathcal{P}$ and every $k {=} 1,\dots,k_{\text{max}}$ can be obtained.

%\vspace{+0.5cm}
\subsection{Policy Convergence}\label{sec: value iteration}

In classical MGs, $V^{*,i,k}(s)$ in \eqref{equ: risk-aware-level-k-dp-intergral} can be solved via standard value iteration algorithm. Note that the CPT measure in \eqref{equ: risk-aware-level-k-dp-intergral} is non-convex and nonlinear, thus the conditions for the convergence of value iteration algorithm for solving \eqref{equ: risk-aware-level-k-dp-intergral} need to be-established.

\begin{theorem}\label{theorem: value_iteration}
Denote $\langle s, a^i, a^{-i}\rangle{:=}c^{a^{-i}}_{s,a^i}$ and normalize $\tilde{\rho}^i(c^{a^{-i}}_{s,a^i}){:=}\tilde{\rho}^i(\mathbb{P}(a^{-i}|s,a^i))$ by

\begin{equation}\label{equ: probability function transformation}
\hspace{-0.1cm}\rho^i(c^{a^{-i}}_{s,a^i})  = {\tilde{\rho}^i(c^{a^{-i}}_{s,a^i})}/{{\sum}_{a^{-i'}}  \tilde{\rho}^i(c^{a^{-i'}}_{s,a^i})},
\end{equation}
where $\tilde{\rho}$ refers to $\tilde{\rho}^{+}$ defined in \eqref{equ:CPT_measure_discrete} since we consider only positive rewards. For an arbitrary agent $i{\in}\mathcal{P}$, if the one-step reward $R^i$ is lower-bounded by $R_{\text{min}}$ with $R_{\text{min}} \geq 1$, then $\forall s \in \mathcal{S}$ and all intelligence levels with $k{=}1,2,\cdots$, the dynamic programming problem in (\ref{equ: risk-aware-level-k-dp-intergral}) can be solved by the following value iteration algorithm (\cref{alg: policy algorithm}):
\begin{align}\label{equ: risk-aware-level-k-dp-discretized-value-iteration}
V^{i,k}_{m+1}(s) & = \max_{a^i\in A^i} {\sum}_{a^{-i} \in A^{-i}}\rho^{i}(c^{a^{-i}}_{s,a^i}) u^{i}\big(R^i(s,a^i,a^{-i}) {+} \nonumber\\
&\tilde{\gamma}V_m^{i,k}(s')\big),\quad  s' = \mathcal{T}(s,a^i,a^{-i}),
\end{align}
\normalsize
where $u^i$ refers to agent $i$'s instance of $u^{+}$ in \eqref{equ:CPT_measure_discrete}. Moreover, as $m \rightarrow \infty$, $V^{i,k}_{m+1}$ converges to the optimal value function $V^{*,i,k}(s)$.
\end{theorem}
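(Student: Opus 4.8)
The plan is to show that the value-iteration operator defined by the right-hand side of \eqref{equ: risk-aware-level-k-dp-discretized-value-iteration} is a contraction on an appropriate complete metric space, so that Banach's fixed-point theorem yields a unique fixed point to which the iterates converge, and then to identify that fixed point with $V^{*,i,k}$ via \eqref{equ: risk-aware-level-k-dp-intergral}. First I would fix the intelligence level $k$ and agent $i$, treat $\pi^{*,-i,k-1}$ as given (it is obtained from the previous level), and define the operator $\mathcal{B}$ on bounded real-valued functions over $\mathcal{S}$ by $(\mathcal{B}V)(s) = \max_{a^i} \sum_{a^{-i}} \rho^i(c^{a^{-i}}_{s,a^i})\, u^i\!\big(R^i(s,a^i,a^{-i}) + \tilde\gamma V(s')\big)$. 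The key observation is that after the normalization \eqref{equ: probability function transformation}, the weights $\rho^i(c^{a^{-i}}_{s,a^i})$ are nonnegative and sum to one over $a^{-i}$, so $\mathcal{B}$ is a genuine weighted average composed with the monotone concave map $u^i$ and followed by a $\max$; monotonicity of $\mathcal{B}$ (if $V \le W$ pointwise then $\mathcal{B}V \le \mathcal{B}W$) is then immediate from monotonicity of $u^i$, of the average, and of the $\max$.

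The crux is the contraction estimate, and this is where I expect the main obstacle to lie, because $u^i(x)=x^\alpha$ with $\alpha\in(0,1]$ is concave but \emph{not} globally Lipschitz near $0$ — its derivative blows up as $x\to 0^+$. This is exactly why the hypothesis $R_{\min}\ge 1$ is imposed: it guarantees that the argument $R^i(s,a^i,a^{-i}) + \tilde\gamma V(s')$ stays bounded away from $0$ (indeed $\ge 1$, since one can first check the iterates remain $\ge$ some positive constant, e.g. by showing $\mathcal{B}$ maps $\{V: V\ge R_{\min}/(1-\tilde\gamma)\cdot\text{something}\}$ into itself, or more simply that $V_m\ge 1$ inductively once $V_0$ is chosen $\ge 1$). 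On the region $[1,\infty)$ the function $u^i$ has derivative $\alpha x^{\alpha-1}\le \alpha\le 1$, so $u^i$ is $1$-Lipschitz there. Hence for two bounded functions $V,W$ with values in the admissible region, $|u^i(R+\tilde\gamma V(s')) - u^i(R+\tilde\gamma W(s'))| \le \tilde\gamma |V(s')-W(s')| \le \tilde\gamma\|V-W\|_\infty$; averaging over $a^{-i}$ with the probability weights $\rho^i$ preserves this bound, and taking $\max_{a^i}$ of each side and then the sup over $s$ gives $\|\mathcal{B}V - \mathcal{B}W\|_\infty \le \tilde\gamma \|V-W\|_\infty$. Since $\tilde\gamma<1$, $\mathcal{B}$ is a contraction.

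To finish, I would invoke the Banach fixed-point theorem on the complete metric space of bounded functions $\{V:\mathcal{S}\to\mathbb{R}, V\ge 1\}$ (closed subset of $(\ell^\infty(\mathcal{S}),\|\cdot\|_\infty)$, hence complete) equipped with the sup-norm: $\mathcal{B}$ has a unique fixed point $V^\star$ and $V^{i,k}_{m}\to V^\star$ geometrically, at rate $\tilde\gamma$, from any admissible initialization. It then remains to argue $V^\star = V^{*,i,k}$: the fixed-point equation $V^\star = \mathcal{B}V^\star$ is precisely \eqref{equ: risk-aware-level-k-dp-intergral} after unfolding the definition of $\mathbb{CPT}_{\pi^{*,-i,k-1}}$ in the positive-reward case — by Theorem~3.2 of \cite{lin2013stochastic} the $\mathbb{CPT}$ measure there is the reward-transition mapping $\sum_{a^{-i}}\rho^i(c^{a^{-i}}_{s,a^i})\,u^i(\cdot)$ — so any solution of \eqref{equ: risk-aware-level-k-dp-intergral}, in particular $V^{*,i,k}$, is a fixed point of $\mathcal{B}$; uniqueness of the fixed point forces $V^\star = V^{*,i,k}$. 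Finally, since the construction is carried out for generic $k$ with $\pi^{*,-i,k-1}$ already well-defined, the conclusion holds for all $k=1,2,\dots$ by induction on $k$, with the level-$0$ policy $\pi^0$ as the anchoring base case.
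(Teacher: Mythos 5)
Your proposal is correct and follows essentially the same route as the paper: reduce level-$k$ to a single-agent problem with the known level-$(k-1)$ opponent policy, show the operator $\mathcal{B}$ defined by the right-hand side of \eqref{equ: risk-aware-level-k-dp-discretized-value-iteration} is a sup-norm contraction (using $R_{\min}\ge 1$ precisely to keep the argument of $u^i(x)=x^\alpha$ in the region where its derivative is at most $\alpha\le 1$), and conclude by Banach's fixed-point theorem together with induction on $k$. Your identification of the fixed point with $V^{*,i,k}$ via the reward-transition-mapping result of \cite{lin2013stochastic} is also the same move the paper makes.
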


\begin{proof}
Detailed proof is given in Section B of the supplementary material. Here, we show only the skeleton. As shown in \cref{sec: BRSMG policy}, the iterative format of level-$k$ policies has reduced \eqref{equ: risk-aware-level-k-dp-intergral} to a single-agent policy optimization problem with known $\pi^{*,-i,k{-}1}$ from previous iterations. Hence, we only need to show that the CPT operator defined by $\mathcal{B}V^{i,k}_{m} = V^{i,k}_{m+1}$ is a contraction when $R_{\min}{\ge}1$ for any $k{\ge}1$ (Lemma 2 in Section B of the supplementary material).
\end{proof}

\begin{algorithm}[t]
    \footnotesize
    \caption{\small Risk-sensitive quanntal level-$k$ policies}
    \label{alg: policy algorithm}
    
    \textbf{Input}: Markov Game $\mathcal{G}$, $k_{\text{max}}$, and the anchoring policy $\pi^{0}$.
    
    \textbf{Output}: $\{\pi^{*,i,k}\}$, $i\in\mathcal{P}$ and $k =1,\dots,k_{\text{max}}$.
    
    \For{$k = 1:k_{\text{max}}$}{
        \For{$i \in \mathcal{P}$}{
            Initialize $V^{i,k}(s), \forall s \in \mathcal{S}$;
            
            \While{$V^{i,k}$ not converged}{
                
                \For{$s\in\mathcal{S}$}{
                    $V^{i,k}(s) \leftarrow \mathcal{B} V^{i,k}(s)$;
                }
            }
            \For{$(s,a^i) \in \mathcal{S}\times A^i$}{
                Compute $\pi^{*,i,k}(s,a^i)$ based on (\ref{equ: risk-aware-level-k-stochastic-policy});
            }
        }
    }
    Return $\{\pi^{*,i,k}\}$, $i\in\mathcal{P}$ and $k \in \mathbb{K}$.
    \normalsize
\end{algorithm}

%\vspace{-0.4cm}
\section{The Inverse Reward Learning Problem}\label{sec: IRL}

We now consider the inverse learning problem in BRSMGs. Given demonstrated trajectories of two interacting agents who are running the quantal level-$k$ risk-sensitive policies, our goal is to infer agents' rewards, risk-sensitive parameters, and levels of intelligence.

\subsection{Formulation of the Inverse Learning Problem }

We assume that the one-step rewards for both agents can be linearly parameterized by a group of selected features: $\forall i{\in}\mathcal{P}, R^i(s, a^i, a^{-i}) {=} (\omega^i)^\intercal \Phi^i(s, a^i, a^{-i})$, where $\Phi^i(s,a^i,a^{-i}){: }\mathcal{S} {\times} A^i {\times} A^{-i} {\to} \mathbb{R}^d$ is a known feature function that maps a game state $s$, an action of agent $i$, and an action of agent $-i$ to a $d$-dimensional feature vector, and $\omega^i {\in} \mathbb{R}^d$ is a $d$-dimensional reward parameter vector. We define $\bar{\omega}{=} (\bar{\gamma},\bar{\omega}^{\text{r}}, \bar{k})$, where $\bar{\gamma} {=} (\gamma^{i},\gamma^{-i})$, $\bar{\omega}^{\text{r}}{=}(\omega^i, \omega^{-i})$, and $\bar{k}=(k^{i},k^{-i})$, respectively, represent the parameters in the weighting functions in (1b), the reward parameter vectors, and the levels of intelligence of both agents. Thus, given a set of demonstrated trajectories from the two players in a BRSMG denoted by $\mathcal{D}{=}\{\xi_1, {\cdots}, \xi_M\}$ with $\xi {=} \{(s_0,\bar{a}_0),\dots,(s_{N{-}1},\bar{a}_{N{-}1})\}$, $s_t{\in}\mathcal{S}$, and $\bar{a}_{t}{\in}\mathcal{A}$ ($t{=}0,{\dots},N{-}1$), the inverse problem aims to retrieve the underlying reward parameters, the risk-sensitive parameters, and the levels of intelligence of the agents from $\mathcal{D}$. Based on the principle of Maximum Entropy \cite{ziebart2008maximum}, the problem is equivalent to solving the following optimization problem:

\footnotesize
\begin{align}\label{equ: maximum likelihood}
	\max_{\bar{\omega}} {\sum}_{\xi\in \mathcal{D}} \hspace{-0.3cm}\log \mathbb{P}\left(\xi | \bar{\omega}\right)= \max_{\bar{\omega}} {\sum}_{\xi\in \mathcal{D}} \hspace{-0.3cm}\log {\prod}_{t=0}^{N-1}\mathbb{P}(\bar{a}_t | s_{t},\bar{\omega}),
\end{align}
\normalsize
where $\mathbb{P}(\bar{a}_t|s_t,\bar{\omega})$ is the joint likelihood of agents' actions conditioned on states and parameters and can be expressed as
\small
\begin{align} \label{equ: policy likelihood}
   	\hspace{-0.2cm} \log{\mathbb{P}(\bar{a}_t | s_{t},\bar{\omega})} &= \log \pi^{*,i,k^i}_{(\bar{\gamma}, \bar{\omega}^r)}(s_t,a^i_t)\pi^{*,{-i},k^{-i}}_{(\bar{\gamma}, \bar{\omega}^r)}(s_t,a^{-i}_t),
\end{align}
\normalsize
where $\pi^{*,i,k^i}_{(\bar{\gamma}, \bar{\omega}^r)}$ and $\pi^{*,-i,k^{-i}}_{(\bar{\gamma}, \bar{\omega}^r)}$, respectively, represent the risk-sensitive quantal level-$k$ policies of agent $i$ and agent $-i$ induced by parameters $(\bar{\gamma}, \bar{\omega}^r)$.

\noindent
\textbf{Problem approximation.} The optimization \eqref{equ: maximum likelihood} can be formulated as a mixed-integer optimization which is infeasible to solve. Therefore, we make the following approximation: we remove $\bar{k}$ from $\bar{\omega}$, and treat $\bar{k}$ as representations of agents' internal states which can be inferred based on agents' demonstrations and most recent estimates of their reward parameters and risk-measure parameters. With that, we evaluate the expected likelihood of $\bar{a}_t$ with respect to the inferred distributions of $\bar{k}$, and solve \eqref{equ: maximum likelihood} via gradient ascent.

\subsection{The Gradient Information}\label{sec: likelihoood_demonstration}

With the proposed approximation described above, we re-define $\bar{\omega}$ as $(\bar{\gamma},\bar{\omega}^{\text{r}})$, then \eqref{equ: policy likelihood} can be re-written as:
\begin{align} \label{equ: trajectory likelihood full}
   	& \log \mathbb{E}_{\bar{k}|\xi_{t-1},\bar{\omega}} \Big[{\mathbb{P}(\bar{a}_t | s_{t},\bar{\omega})}\Big]
	{=} \log{\sum}_{(k^i,k^{-i}) \in \mathbb{K}^2} \pi^{*,i,k^i}_{\bar{\omega}}(s_t,a^i_t)\nonumber\\
    & \cdot\pi^{*,{-i},k^{-i}}_{\bar{\omega}}(s_t,a^{-i}_t)\mathbb{P}(k^i | \xi_{t-1}, \bar{\omega} ) \mathbb{P}(k^{-i} | \xi_{t-1}, \bar{\omega} ),
\end{align}
where $\mathbb{P}(k^i | \xi_{t-1}, \bar{\omega} )$, $k\in\mathbb{K}$, is the posterior belief in an agent's intelligence level inferred based on the joint trajectory history upon time $t{-}1$. Note that initially, we set $\mathbb{P}(k^i|\xi_{-1}, \bar{\omega} )$ as an uniform distribution over $\mathbb{K}$. Then, $\mathbb{P}(k^i | \xi_{t-1}, \bar{\omega} )$ can be updated recursively from $t=0$ using Bayesian inference:

\begin{align} \label{equ: model identification}
\mathbb{P}(k^i | \xi_t, \bar{\omega})= \frac{\pi^{*,i,k^i}_{\bar{\omega}}(s_t,a_t^i) \mathbb{P}(k^i | \xi_{t-1}, \bar{\omega})}{\sum_{k' \in \mathbb{K}} \pi^{*,i,k'}_{\bar{\omega}}(s_t,a_t^{i},) \mathbb{P}(k' | \xi_{t-1}, \bar{\omega})}.
\end{align}
\normalsize

From \eqref{equ: maximum likelihood}, \eqref{equ: trajectory likelihood full} and \eqref{equ: model identification}, we can see that the gradient ${ \partial \log\mathbb{P}(\xi|\bar{\omega})}/{\partial \bar{\omega}}$ depends on two items (details are in Section C of the supplementary material): 1) the gradients of both agents' policies under arbitrary intelligence level $k{\in}\mathbb{K}$ with respect to $\bar{\omega}$, i.e., ${\partial \pi^{*,i,k}_{\bar{\omega}}}/{\partial \bar{\omega}}$ and 2) the gradients of posterior beliefs in agents' intelligence levels with respect to $\bar{\omega}$, i.e., $\partial \log\mathbb{P}(k^i|\xi_{t{-}1},\bar{\omega})/\partial \bar{\omega}$.

\noindent
\textbf{Gradients of policies.} Recall \eqref{equ: risk-aware-level-k-stochastic-policy}, ${\partial \pi^{*,i,k}_{\bar{\omega}}}/{\partial \bar{\omega}}$, $\forall i{\in}\mathcal{P}$ and $k{\in}\mathbb{K}$, requires the gradient of the corresponding optimal $Q$ function with respect to $\bar{\omega}$, i.e., $\partial Q_{\bar{\omega}}^{*,i,k}/\partial\bar{\omega}$ (detailed derivation is shown in Section C.1 of the supplementary material). Due to the $\max$ operator in \eqref{equ: risk-aware-level-k-dp-intergral}, direct differentiation is not feasible. Hence, we use a smooth approximation for the $\max$ function, that is, $\max(x_1,{\cdots},x_{n_x}) {\approx} \big(\sum_{i=1}^{n_x} (x_i)^\kappa\big)^{\frac{1}{\kappa}}$ with all $x_i{>}0$. The parameter $\kappa {>} 0$ controls the approximation error, and when $\kappa {\to} \infty$, the approximation becomes exact. Therefore, \eqref{equ: risk-aware-level-k-dp-intergral} can be re-written as
\begin{align}\label{equ: risk-aware-level-k-V-p_norm}
V_{\bar{\omega}}^{*,i,k}(s) & ={\max}_{a^i \in A^i} Q_{\bar{\omega}}^{*,i,k}(s,a^i)\nonumber\\
%\vspace{-0.5cm}
& \approx \Bigg( {\sum}_{a^i \in A^i} \Big( Q_{\bar{\omega}}^{*,i,k}(s,a^i)\Big)^{\kappa} \Bigg)^{\frac{1}{\kappa}}.
\end{align}
%$\Bigg( \sum_{a^i \in A^i} \Big( \mathbb{CPT}_{\pi^{-i,k-1}} \Big[ R^i(s,a^i,a^{-i}) + \tilde{\gamma}V^{*,i,k}(s')\Big] \Big)^{\kappa} \Bigg)^{\frac{1}{\kappa}} = $
\normalsize
Taking derivative of both sides of \eqref{equ: risk-aware-level-k-V-p_norm} with respect to $\bar{\omega}$ yields (note that ${(\cdot)}^{'}_{\bar{\omega}} := \frac{\partial (\cdot)_{\bar{\omega}}}{\partial \bar{\omega}}$):
%\vspace{-0.2cm}
\footnotesize
\begin{subequations} \label{equ: Value-gradient-dp}
\begin{align}
    & V^{',*,i,k}_{\bar{\omega}}(s)
    %\frac{\partial V^{*,i,k}_{\bar{\omega}}}{\partial \bar{\omega}} (s) 
    {\approx}  \frac{1}{\kappa} \Bigg( {\sum}_{a^i \in A^i} \Big( Q^{*,i,k}_{\bar{\omega}}(s,a^i)\Big)^{\kappa} \Bigg)^{\frac{1{-}\kappa}{\kappa}}\\
     &\cdot{\sum}_{a^i{\in} A^i} \Big[ \kappa \Big( Q^{*,i,k}_{\bar{\omega}}(s,a^i)\Big)^{\kappa{-}1} \cdot Q^{',*,i,k}_{\bar{\omega}}(s,a^i)\Big],\nonumber\\
    %%\vspace{-0.4cm}
    & Q^{',*,i,k}_{\bar{\omega}} (s,a^i) = {\sum}_{a^{-i}\in A^{-i}} \Bigg( \frac{\partial \rho^{i}_{\bar{\omega}}}{\partial \bar{\omega}}(c^{a^{-i}}_{s,a^i}) u^{i}\big(R^i_{\bar{\omega}}(s,a^i,a^{-i})\nonumber\\
    & + \tilde{\gamma}V^{*,i,\text{k}}_{\bar{\omega}}(s') \big)  +\rho^{i}_{\bar{\omega}}(c^{a^{-i}}_{s,a^i})\alpha\big(R^i_{\bar{\omega}}(s,a^i,a^{-i}) \\
    & + \tilde{\gamma}V^{*,i,\text{k}}_{\bar{\omega}}(s')\big)^{\alpha-1}\big(\frac{\partial R^i_{\bar{\omega}}}{\partial \bar{\omega}}(s,a^i,a^{-i}) + \tilde{\gamma}V^{',*,i,k}_{\bar{\omega}}(s')\big)\Bigg).\nonumber
\end{align}
\end{subequations}
\normalsize

%\vspace{-0.2cm}
Notice that in \eqref{equ: Value-gradient-dp}, $V^{',*,i,k}_{\bar{\omega}}$ is in a recursive format. Hence, we propose below a dynamic programming algorithm to solve for $V^{',*,i,k}_{\bar{\omega}}$ and $Q^{',*,i,k}_{\bar{\omega}}$ at all state and action pairs.

\begin{theorem}\label{theorem: value-gradient-iteration}
If the one-step reward $R^i$, $i\in\mathcal{P}$, is bounded by $R^i{\in}[R_{\text{min}}, R_{\text{max}}]$ satisfying $\frac{R_{\text{max}}}{R_{\text{min}}^{2-\alpha}}\alpha\tilde{\gamma}  {< }1$, then $\partial V_{\bar{\omega}}^{*,i,k}/\partial\bar{\omega}$ can be found via the following value gradient iteration:
%\vspace{-0.15cm}
\footnotesize
\begin{subequations}
	\begin{align} \label{equ: V_gradient_iteration}
		&  V^{',i,k}_{\bar{\omega},m+1} (s) \approx  \frac{1}{\kappa} \Bigg( {\sum}_{a^i \in A^i} \Big( Q^{*,i,k}_{\bar{\omega}}(s,a^i)\Big)^{\kappa} \Bigg)^{\frac{1-\kappa}{\kappa}}\\
		& \cdot{\sum}_{a^i\in A^i} \Bigg[ \kappa \Big( Q^{*,i,k}_{\bar{\omega}}(s,a^i)\Big)^{\kappa-1} \cdot  Q^{',i,k}_{\bar{\omega},m} (s,a^i)\Bigg],\nonumber\\
		&  Q^{',i,k}_{\bar{\omega},m} (s,a^i) = {\sum}_{a^{-i}\in A^{-i}} \Bigg( \frac{\partial \rho^{i}_{\bar{\omega}}}{\partial \bar{\omega}}(c^{a^{-i}}_{s,a^i})u^i\big(R^i(s,a^i,a^{-i}) \nonumber\\
		& +\tilde{\gamma}V^{*,i,k}_{\bar{\omega}}(s')\big) +\rho^{i}_{\bar{\omega}}(c^{a^{-i}}_{s,a^i}) \alpha\big(R^i_{\bar{\omega}}(s,a^i,a^{-i})\nonumber\\
		& + \tilde{\gamma}V^{*,i,\text{k}}_{\bar{\omega}}(s')\big)^{\alpha-1}\big(\frac{\partial R^i_{\bar{\omega}}}{\partial \bar{\omega}}(s,a^i,a^{-i}) + \tilde{\gamma}V^{',i,k}_{\bar{\omega},m}(s')\big)\Bigg).
	    %& r^i(c^{a^{-i}}_{s,a^i}) = R^i(s,a^i,a^{-i}) +\tilde{\gamma}V^{i,k}_m(s'), s' = \mathcal{T}(s,a^i,a^{-i}),\nonumber
	\end{align}
\end{subequations}
\normalsize
%\vspace{-0.35cm}
Moreover, the algorithm converges to $\partial V_{\bar{\omega}}^{*,i,k}/\partial\bar{\omega}$ as $m {\to} \infty$.
\end{theorem}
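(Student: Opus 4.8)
The plan is to follow the template of the proof of \cref{theorem: value_iteration}: recast the coupled recursion \eqref{equ: V_gradient_iteration} as a single update map on the vector of state‑gradients and show that map is a contraction. The structural fact that makes this work is that $V^{*,i,k}_{\bar\omega}$ and $Q^{*,i,k}_{\bar\omega}$ are \emph{already the converged quantities} produced by \cref{theorem: value_iteration} (with the opponent policy $\pi^{*,-i,k-1}$ fixed from the previous level of the iterative construction), so inside \eqref{equ: V_gradient_iteration} they act as constants. Substituting the $Q^{',i,k}_{\bar\omega,m}$ line into the $V^{',i,k}_{\bar\omega,m+1}$ line therefore produces an \emph{affine} recursion $V^{',i,k}_{\bar\omega,m+1}=\mathcal{A}\,V^{',i,k}_{\bar\omega,m}+b$ on the finite‑dimensional space indexed by $s\in\mathcal{S}$; it suffices to treat one scalar coordinate of the gradient w.r.t.\ $\bar\omega$ at a time, each such coordinate obeying an affine recursion with the same linear part $\mathcal{A}$ and a fixed forcing term $b$ (collecting the $\partial\rho^i_{\bar\omega}/\partial\bar\omega$ and $\partial R^i_{\bar\omega}/\partial\bar\omega$ contributions, which are finite). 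Once $\|\mathcal{A}\|_\infty<1$, Banach's theorem gives convergence, and the limit must equal $\partial V^{*,i,k}_{\bar\omega}/\partial\bar\omega$ because the latter, obtained by differentiating the fixed‑point identity \eqref{equ: risk-aware-level-k-V-p_norm}, is itself the (unique) fixed point of this affine map; convergence of $Q^{',i,k}_{\bar\omega,m}$ to $\partial Q^{*,i,k}_{\bar\omega}/\partial\bar\omega$, and then of the policy gradients through \eqref{equ: risk-aware-level-k-stochastic-policy}, follows immediately. (Differentiability of $V^{*,i,k}_{\bar\omega}$ in $\bar\omega$ itself follows from the implicit function theorem applied to the fixed‑point equation, whose linearization $I-\mathcal{A}$ is invertible once $\|\mathcal{A}\|_\infty<1$.)

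The quantitative core is the estimate $\|\mathcal{A}\|_\infty<1$. I would track the coefficient with which $V^{',i,k}_{\bar\omega,m}(s')$ enters $V^{',i,k}_{\bar\omega,m+1}(s)$. In the $Q'$ line the only term carrying $V^{',i,k}_{\bar\omega,m}$ is $\rho^{i}_{\bar\omega}(c^{a^{-i}}_{s,a^i})\,\alpha\big(R^i_{\bar\omega}(s,a^i,a^{-i})+\tilde\gamma V^{*,i,k}_{\bar\omega}(s')\big)^{\alpha-1}\tilde\gamma\, V^{',i,k}_{\bar\omega,m}(s')$, so $s'$ contributes $\sum_{a^{-i}:\,\mathcal{T}(s,a^i,a^{-i})=s'}\rho^{i}_{\bar\omega}(c^{a^{-i}}_{s,a^i})\,\alpha\,(R^i_{\bar\omega}+\tilde\gamma V^{*,i,k}_{\bar\omega}(s'))^{\alpha-1}\tilde\gamma$ to $Q^{',i,k}_{\bar\omega,m}(s,a^i)$. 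The $V'$ line is a $\rho$‑free weighted combination $V^{',i,k}_{\bar\omega,m+1}(s)=\sum_{a^i}\eta_{a^i}(s)\,Q^{',i,k}_{\bar\omega,m}(s,a^i)$ with $\eta_{a^i}(s)=\big(Q^{*,i,k}_{\bar\omega}(s,a^i)/V^{*,i,k}_{\bar\omega}(s)\big)^{\kappa-1}\ge0$, using $V^{*,i,k}_{\bar\omega}(s)\approx\big(\sum_{a'}(Q^{*,i,k}_{\bar\omega}(s,a'))^{\kappa}\big)^{1/\kappa}$ from \eqref{equ: risk-aware-level-k-V-p_norm}. Summing the (nonnegative) coefficients over $s'$ and invoking (i) $\sum_{a^{-i}}\rho^{i}_{\bar\omega}(c^{a^{-i}}_{s,a^i})=1$, which is exactly the normalization \eqref{equ: probability function transformation}; (ii) $\alpha\le1$ together with $R^i_{\bar\omega}+\tilde\gamma V^{*,i,k}_{\bar\omega}(s')\ge R^i_{\bar\omega}\ge R_{\min}$, hence $(R^i_{\bar\omega}+\tilde\gamma V^{*,i,k}_{\bar\omega}(s'))^{\alpha-1}\le R_{\min}^{\alpha-1}$; and (iii) the weight bound $\sum_{a^i}\eta_{a^i}(s)\le R_{\max}/R_{\min}$, yields $\|\mathcal{A}\|_\infty\le \frac{R_{\max}}{R_{\min}}\,R_{\min}^{\alpha-1}\,\alpha\tilde\gamma=\frac{R_{\max}}{R_{\min}^{2-\alpha}}\alpha\tilde\gamma<1$, which is precisely the hypothesis of \cref{theorem: value-gradient-iteration}.

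Step (iii) is the part I expect to be the real obstacle. I would get it from $\sum_{a^i}(Q^{*,i,k}_{\bar\omega}(s,a^i))^{\kappa-1}\le\big(\min_{a^i}Q^{*,i,k}_{\bar\omega}(s,a^i)\big)^{-1}\sum_{a^i}(Q^{*,i,k}_{\bar\omega}(s,a^i))^{\kappa}=\big(V^{*,i,k}_{\bar\omega}(s)\big)^{\kappa}/\min_{a^i}Q^{*,i,k}_{\bar\omega}(s,a^i)$, so that $\sum_{a^i}\eta_{a^i}(s)\le V^{*,i,k}_{\bar\omega}(s)/\min_{a^i}Q^{*,i,k}_{\bar\omega}(s,a^i)$, followed by a chain of elementary two‑sided bounds relating $V^{*,i,k}_{\bar\omega}$ and $Q^{*,i,k}_{\bar\omega}$ to the reward range: each $Q^{*,i,k}_{\bar\omega}(s,a^i)$ is a $\rho$‑weighted average of $\big(R^i_{\bar\omega}+\tilde\gamma V^{*,i,k}_{\bar\omega}(s')\big)^{\alpha}$, hence lies in $\big[(R_{\min}+\tilde\gamma v_{\min})^{\alpha},(R_{\max}+\tilde\gamma v_{\max})^{\alpha}\big]$ with $v_{\min}=\min_s V^{*,i,k}_{\bar\omega}(s)$ and $v_{\max}=\max_s V^{*,i,k}_{\bar\omega}(s)$ inheriting the same bounds, and $\alpha\le1$, $\tilde\gamma\in(0,1)$ are used to collapse the exponent into the stated ratio. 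The remaining items — confirming that differentiating \eqref{equ: risk-aware-level-k-V-p_norm} reproduces \eqref{equ: V_gradient_iteration} with the fixed point $V^{',i,k}_{\bar\omega}$ in place of $V^{',i,k}_{\bar\omega,m}$, that $b$ is finite, and the routine Banach argument — are bookkeeping, to be carried out in full in the supplement exactly as was done for \cref{theorem: value_iteration}.
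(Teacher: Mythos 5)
Your proposal is correct and follows essentially the same route as the paper: the paper likewise defines the gradient operator $\nabla\mathcal{B}$ (with $V^{*,i,k}_{\bar\omega}$, $Q^{*,i,k}_{\bar\omega}$ frozen at their converged values from \cref{theorem: value_iteration}), shows it is a contraction under $\frac{R_{\max}}{R_{\min}^{2-\alpha}}\alpha\tilde\gamma<1$, and concludes by a Banach-fixed-point/induction argument as in \cref{theorem: value_iteration}. Your quantitative bookkeeping --- $\sum_{a^{-i}}\rho^i=1$, the factor $\alpha R_{\min}^{\alpha-1}\tilde\gamma$ from differentiating $u^i$, and the weight bound $\sum_{a^i}\eta_{a^i}(s)\le R_{\max}/R_{\min}$ --- reproduces exactly the stated contraction modulus, so the estimate is the right one.
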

%\vspace{-0.55cm}
\begin{proof}
We first define $\nabla\mathcal{B}V^{',i,k}_{m} = V^{',i,k}_{m+1}$, and show that the operator $\nabla\mathcal{B}$ is a contraction under the given conditions (derivations of ${\partial \rho^{i}_{\bar{\omega}}}/{\partial \bar{\omega}}$ are shown in Section C.2 of the supplementary material). Then, the statement is proved by induction similar to \cref{theorem: value_iteration}. More details are given in Section D of the supplementary material.
\end{proof}

\begin{algorithm}
    \footnotesize
    \caption{Gradients of quantal level-$k$ risk-sensitive policies}
    \label{alg: policy gradient algorithm}
    
    \textbf{Input}: Markov Game model $\mathcal{G}$, highest intelligence level $k_{\text{max}}$, and $\pi^{*,i,k}$, $i\in\mathcal{P}$ and $k = 1,\dots,k_{\text{max}}$.
    
    \textbf{Output}:$\{ \frac{\partial \pi^{*,i,k}_{\bar{\omega}}}{\partial \bar{\omega}} \}$, $i\in\mathcal{P}$ and $k \in \mathbb{K}$.
    
    \For{$k = 1,\dots,k_{\text{max}}$}{
        \For{$i \in \mathcal{P}$}{
            Initialize $V^{',i,k}_{\bar{\omega}}(s), \forall s \in \mathcal{S}$;
            
            \While{$V^{',i,k}$ not converged}{
                
                \For{$s\in\mathcal{S}$}{
                    $V^{',i,k}(s) \leftarrow \nabla \mathcal{B} V^{',i,k}(s)$;
                }
            }
            \For{$(s,a^i) \in \mathcal{S}\times A^i$}{
                Compute $ \frac{\partial \pi^{i,k}_{\bar{\omega}}}{\partial \bar{\omega}} (s,a^i) $ by differentiating \cref{equ: risk-aware-level-k-stochastic-policy} with respect to $\omega$;
            }
        }
    }
    
    Return $\{ \frac{\partial \pi^{i,k}_{\bar{\omega}}}{\partial \bar{\omega}} \}$, $i\in\mathcal{P}$ and $k \in \mathbb{K}$.
\normalsize
\vspace{-0.1cm}
\end{algorithm}

\noindent
\textbf{Gradient of the posterior belief.} {We summarize the value iteration algorithm that computes the policy gradient in \cref{alg: policy gradient algorithm}.} The second gradient that we need to compute is the gradient of the posterior belief in $k$ with respect to $\bar{\omega}$, i.e., $\partial \log\mathbb{P}(k^i|\xi_{t{-}1},\bar{\omega}) /\partial \bar{\omega}$. Recall \eqref{equ: model identification}, we have $\partial \log\mathbb{P}(k^i|\xi_{t{-}1},\bar{\omega})/\partial \bar{\omega}$ depending on $\partial \pi_{\bar{\omega}}^{*,i,k} / \partial \bar{\omega}(s_{t{-}1}, a^i_{t{-}1})$ and $\partial \log\mathbb{P}(k|\xi_{t{-}2},\bar{\omega})/\partial \bar{\omega}$ for all $k{\in}\mathbb{K}$. Substituting the gradients of policies obtained through \cref{alg: policy gradient algorithm} in $\partial \log\mathbb{P}(k|\xi_{t{-}1},\bar{\omega})/\partial \bar{\omega}$ yields a recursive format from time $0$ to time $t{-}1$, which can be easily computed.

\noindent
\textbf{Generalization to other iterative reasoning models.} Both \cref{theorem: value_iteration} and \cref{theorem: value-gradient-iteration} naturally extend to other probabilistic iterative reasoning models as long as the optimal policies are iterative and satisfy \eqref{equ: risk-aware-level-k-dp-intergral}. For instance, the quantal cognitive hierarchy model \cite{wright2014level} that allows for mixed levels of intelligence can be well applied. Detailed extension and comparison among these models are left to future work.

\subsection{The Inverse Learning Algorithm in BRSMG}
With the gradient of \eqref{equ: maximum likelihood} defined, the gradient ascent algorithm is used to find local optimal parameters in $\bar{\omega}$ that maximize the log-likelihood of demonstrations in a BRSMG \cref{alg: learning algorithm}.

%\vspace{-0.3cm}
\begin{algorithm}
	\caption{The inverse learning algorithm}
	\label{alg: learning algorithm}
	\footnotesize
	\textbf{Input:} A demonstration set $\mathcal{D}$ and learning rate $\eta$
	
	\textbf{Output:} Learned parameters $\bar{\omega}$.
	
	Initialize $\bar{\omega}$.
	
	\While{\text{not converged}}{
		
		Run \cref{alg: policy algorithm}, \cref{alg: policy gradient algorithm}
		
		Compute gradient of the log-likelihood of the demonstration following: $\nabla_{\bar{\omega}} = \sum_{\xi\in\mathcal{D}}\frac{ \partial \log\big(\mathbb{P}(\xi|\bar{\omega})\big)}{\partial \bar{\omega}}$;
		
		Update the parameters following: $\bar{\omega} = \bar{\omega} + \eta \nabla_{\bar{\omega}}$;
	}
	
	\textbf{Return:} $\bar{\omega}$
\end{algorithm}
\normalsize

%\vspace{-0.5cm}
\section{Experiments}\label{sec: simulation}

In this section, we utilize a grid-world navigation example to verify the proposed algorithms in both the forward policy design problem and the inverse reward learning problem in a BRSMG. The simulation setup is shown in \cref{fig: intro_figure}. Two human agents must exit the room through two different doors while avoiding the obstacles and potential collisions with each other. We assume that the two agents move simultaneously, and they can observe the actions and states of each other in the previous time step. Moreover, we let $k_{\text{max}}{=}2$ in this experiment since psychology studies found that most humans perform at most two layers of strategic thinking \cite{stahl1995players}. 

\subsection{Environment Setup}
We define the state as $s {=} (x^1,y^1,x^2,y^2)$, where $x^i$ and $y^i$ denote the coordinates of the human agent $i$, $i{\in}\mathcal{P}$. The two agents share a same action set $A {=} \{ \text{move left}, \text{move right}, \text{move up}, \text{move down}, \text{stay}\}$. In each state, the reward of agent $i$ includes two elements: a navigation reward as shown in \cref{fig: nominal reward} and a safety reward that reflects the penalty for collisions. We restrict all rewards to be positive, satisfying $R_{\min}{=}1$ and $\frac{R_{\text{max}}}{R_{\text{min}}^{2-\alpha}}\alpha\tilde{\gamma}  {< }1$. If a collision happens, an agent will collect a fixed reward of $1$. If there is no collision, agents receive rewards greater than $1$ according to the navigation reward map.

\noindent\textbf{Selection of the quanntal level-$0$ policy}. Recall that a quantal level-$0$ policy is required to initiate the iterative reasoning process in \cref{alg: policy algorithm}. In this work, we use an uncertain-following policy as an exemplary quantal level-$0$ policy: from a quantal level-$1$ agent's perspective, a quantal level-$0$ agent is a follower who accommodates the quantal level-$1$ agent's planned immediate action. Namely, given state $s_t$ and action $a^{-i}$ from the opponent agent (i.e., the leader), the stochastic policy of a level-$0$ agent $i$ satisfies

\small
\begin{equation}\label{equ: anchor policy}
\pi^{*,i,0} (s_t, a^{i}| a^{-i}) = \frac{\exp\big({R^{i}(s_t,a^{i},a^{-i})}\big)}{\sum_{a'\in A^{i}} \exp\big(R^{i}(s_t,a',a^{-i})\big)},  \forall a^{i}\in A^{i}.    
\end{equation}
\normalsize
%\tr{note: I am not sure use examplary or exemplary}

\begin{figure}
\begin{center}
\begin{picture}(450, 90)
\put(20,  0){\epsfig{file=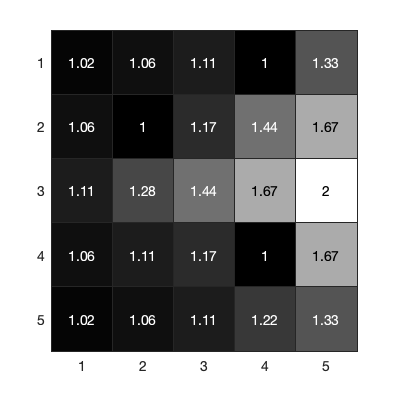,width=0.4 \linewidth, trim=1.2cm 1.2cm 0.cm 0cm,clip}}
\put(130,  0){\epsfig{file=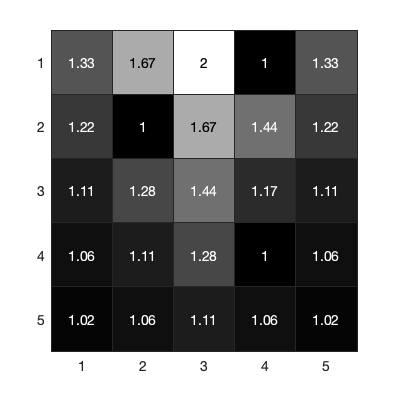,width=0.398 \linewidth, trim=1.2cm 1.2cm 0.cm 0cm,clip}}
%%%%%%
\end{picture}    
\end{center}
%\vspace{-0.2cm}
\caption{ The navigation reward maps satisfying $R{\ge}1$ (left: the orange agent; right: the blue agent).}\label{fig: nominal reward}
%\vspace{-0.5cm}
\end{figure}

\subsection{Interactions in BRSMG}
\label{subsec: simu_forward}
In this section, we investigate the influence of the risk-sensitive performance measure on agents' policies in a Markov Game by comparing agents' interactive behaviors under risk-neutral and risk-sensitive policies. We set the parameters in the CPT model as $\gamma^{1,2} {=} 0.5$ and $\alpha^{1,2} {=} 0.7$. 

Three cases are considered: Case 1 - both agents are quantal level-$1$ (L1-L1); Case 2 - both agents are quantal level-$2$ (L2-L2); and Case 3 - one agent is quantal level-$1$ and the other is quantal level-$2$ (L1-L2). If both agents exit the environment without collisions and dead-locks, we call it a {success}. We compare the rate of success (RS) of each case under risk-neutral and risk-sensitive policies in 100 simulations with agents starting from different locations. 

\noindent\textbf{Impacts of bounded intelligence}. First, let us see how a risk-neutral agent behaves under different levels of intelligence. Based on the selected anchoring policy in \eqref{equ: anchor policy}, a risk-neutral quantal level-$1$ agent will behave quite aggressively since it believes that the other agent is an uncertain-follower. On the contrary, a risk-neutral quantal level-$2$ agent will perform more conservatively because it believes that the other agent is aggressively executing a quantal level-$1$ policy. \cref{fig: result_1}(b) shows an exemplary trajectory of Case 1. We can see that with two level-$1$ agents, collision happened due to their aggressiveness, i.e., they both assumed that the other would yield. On the other hand, \cref{fig: result_1}(d) and \cref{fig: result_1}(f), respectively, show exemplary trajectories of Case 2 and Case 3 with agents starting from the same locations as in the exemplary trajectory in \cref{fig: result_1}(b). We can see that in both cases, the two agents managed to avoid collisions. In Case 2, both agents behaved more conservatively, and lead to low efficiency (\cref{fig: result_1}(d)), while in Case 3, both agents behaved as their opponents expected and generated the most efficient and safe trajectories (\cref{fig: result_1}(f)). To show the statistical results, we conducted 100 simulations for each case with randomized initial states, and the RS is shown in \cref{fig: result_1}(a) (green). It is shown that similar to what we have observed in the exemplary trajectories, Case 1 lead to the lowest RS, and Case 3 achieved the highest RS. The RS in Case 2 is in the middle because though both agents behaved conservatively, the wrong belief over the other's model may still lead to lower RS compared to Case 3.

\begin{figure}[h]
	\begin{center}
		\begin{picture}(600, 280)
 		\put( 0,  185){\epsfig{file=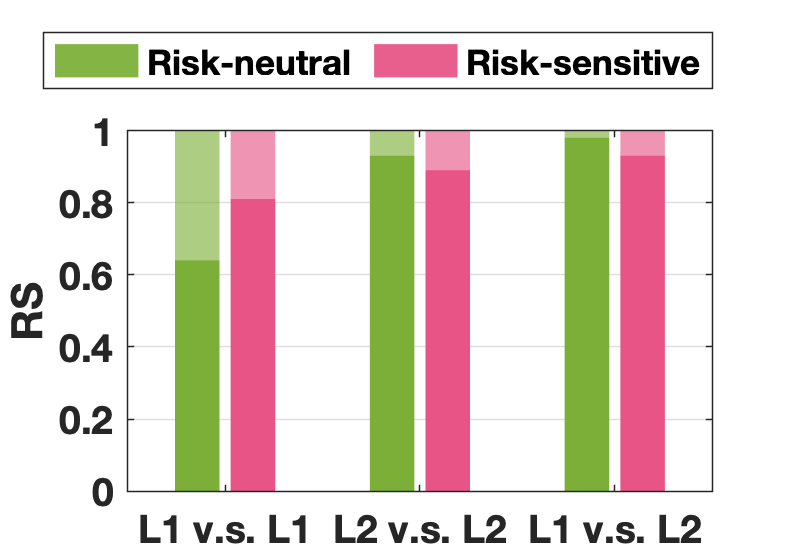,width = 0.55 \linewidth,  trim=0.1cm 0.0cm 0.3cm 0.3cm,clip}}  %%%
		\put( 130,  185){\epsfig{file=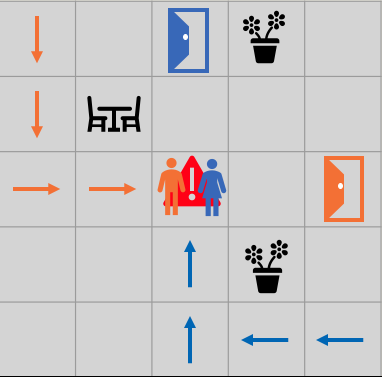,width = 0.38 \linewidth, trim=0.0cm 0.0cm 0.0cm 0cm,clip}}  %%%
 		
 		% 		%%%%%%%%%%%%%%%%%%%%%%%%%%%%%%
 		\put( 25,  90){\epsfig{file=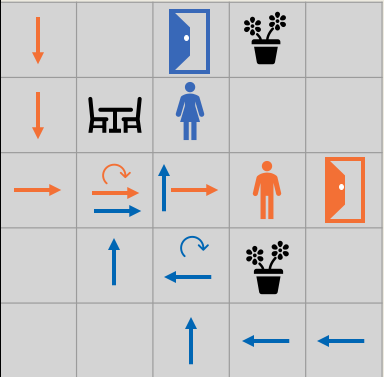,width = 0.38 \linewidth, trim=0.0cm 0.0cm 0.0cm 0cm,clip}}  %%%
% 		%%%%%%%%%%%%%%%%%%%%%%
 		\put( 130,  90){\epsfig{file=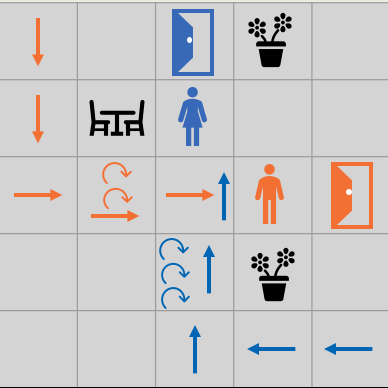,width = 0.38 \linewidth, trim=0.0cm 0.0cm 0.0cm 0cm,clip}}  %%%
		%%%%%%%%%%%%
		\put(25,  0){\epsfig{file=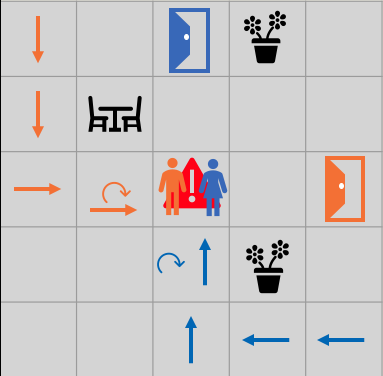,width = 0.38 \linewidth, trim=0.0cm 0.0cm 0.0cm 0cm,clip}}  %%%
		\put( 130,  0){\epsfig{file=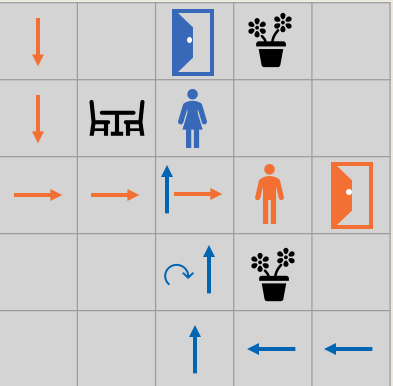,width = 0.38 \linewidth, trim=0.0cm 0.0cm 0.0cm 0cm,clip}}  %%%
		\small
		\put(0,260){(a)}
		\put(210,260){(b)}
		\put(0,170){(c)}
		\put(210,170){(d)}
		\put(0,80){(e)}
		\put(210,80){(f)}
		\normalsize
		\end{picture}
	\end{center}
	%\vspace{-0.4cm}
	\caption{ (a) Performance comparison between the bounded risk-neutral policies and the bounded risk-sensitive policies. (b-f) Examples of interactive trajectories ( circular arrow denotes the action ``stay"); (b) two risk-neutral quantal level-$1$ agents; (c) two risk-sensitive quantal level-$1$ agents; (d) two risk-neutral quantal level-$2$ agents; (e) two risk-sensitive quantal level-$2$ agents; (f) orange: risk-neutral quantal level-$1$ agent; blue: risk-neutral quantal level-$2$ agent.}
	\label{fig: result_1}
\end{figure}

\noindent\textbf{Impacts of risk sensitivity}. In this experiment, we will see how the risk-sensitive CPT model impacts risk-neutral behaviors. As shown in \cref{fig: result_1}(a), in Case 1, the risk-sensitive policies help significantly improve the RS of interactions between two quantal level-$1$ agents. This is because the CPT model makes the quantal level-$1$ agents underestimate the possibilities of ``yielding'' from their opponents, leading to more conservative behaviors with higher RS. Such a conclusion can be verified by comparing the exemplary trajectories shown in \cref{fig: result_1}(b-e). We can see that compared to the risk-neutral case in \cref{fig: result_1}(b), under the risk-sensitive policy, the blue agent decided to yield to the orange one at the fourth step. At the same time, in Case 2 and Case 3, the CPT model makes the quantal level-$2$ agents overestimate the possibilities of ``yielding'' from quantal level-$1$ agents, leading to more aggressive behaviors. An exemplary trajectory is shown in \cref{fig: result_1}(e). We can see that compared to the risk-neutral quantal level-$2$ agents in \cref{fig: result_1}(d), the risk-sensitive quantal level-$2$ agents waited {for} less steps and collide with each other. Hence, the RS for both Case 2 and Case 3 are reduced compared to the risk-neutral scenarios, as shown in \cref{fig: result_1}(a).

%\vspace{-0.2cm}
\subsection{Reward Learning in BRSMG}
\label{subsec: reward_learning}

In this section, we validate \cref{alg: learning algorithm}. In the inverse problem, we aim to learn the navigation rewards and the CPT parameter $\gamma$ of both agents, (i.e., $\bar{\omega}{=}(\gamma, (\omega^1, \omega^2))$ and $\omega^{1,2}\in\mathbb{R}^{25}$), without prior information on their intelligence levels (we need to infer the intelligence levels simultaneously during the learning). 

\noindent
\textbf{Collecting synthetic expert demonstrations.} We first collect some expert demonstrations in the navigation environment via the policies derived in the forward problem in \cref{sec: BRSMG}. Similarly, for generating the demonstrations, we set the parameters of the CPT model as $\gamma^{1,2} {=} 0.5$ and $\alpha^{1,2} {=} 0.7$, and let agents with mixed intelligence levels interact with each other using the risk-sensitive quantal level-$k$ policies. We randomized the initial conditions {(initial positions and intelligence levels) of the agents} and collected $M{=}100$ expert demonstrations (i.e., paired navigation trajectories). The approximation parameter $\kappa$ in $Q$-value approximation \eqref{equ: risk-aware-level-k-V-p_norm} is set to $\kappa = 100$ and the learning rate is set to $\eta = 0.0015$. 

\noindent
\textbf{Metrics.} We evaluate the learning performance via two metrics: the parameter percentage error (PPE), and the policy loss (PL). The PPE of learned parameters $\bar{\omega}^i$ is defined as $\vert\bar{\omega}^i {- }\bar{\omega}^{*,i}\vert/\vert\bar{\omega}^{*,i}\vert$ with $\bar{\omega}^{*,i}$ being the ground-truth value. The PL denotes the error between the ground truth quantal level-$k$ policies and the policies obtained using the learned reward functions. It is defined as $\frac{1}{|\mathbb{K}\times \mathcal{S}\times A^i|}\sum_{(k,s,a^i)\in \mathbb{K}\times \mathcal{S}\times A^i} |\pi^{*,i,k}_{\bar{\omega}}(s,a) - \pi^{*,i,k}_{\bar{\omega}^*}(s,a)|$ where $\pi^{*,i,k}_{\bar{\omega}}$ and $\pi^{*,i,k}_{\bar{\omega}^*}$ are, respectively, the quantal level-$k$ policy of agent $i$ under the learned parameter vector $\bar{\omega}$ and the true vector $\bar{\omega}^*$.

\noindent
\textbf{Results.} \cref{fig: result_2}(a) and \cref{fig: result_2}(b) show, respectively, the histories of PPE and PL during learning. The solid lines represent the means from $25$ trials, and the shaded areas are the 95\% confidence intervals. The average errors of each learned parameter are given in \cref{fig: result_2}(c). We can see that the proposed inverse learning algorithm can effectively recover both agents' rewards and risk-measure parameter $\gamma$. In addition, in \cref{fig: MEIRL_comparesion}(a), we show the identification accuracy of the intelligence levels of agents in the data. More specifically, the identified intelligence level of agent $i$, $i\in\mathcal{P}$, in a demonstration $\xi$ is given by $\hat{k}_i{=}\argmax_{k\in\mathcal{K}}\mathbb{P}(k|\xi_{N-1})$. We can see that accuracy ratios of 86\% and 92\% are achieved for the two agents, respectively. Overall, the results show that the proposed inverse reward learning algorithm can effectively recover rewards, risk-parameters, and intelligence levels of agents in a BRSMG.

\begin{figure}[h]
	\begin{center}
		\begin{picture}(450, 210)
		\put( 20, 145 ){\epsfig{file=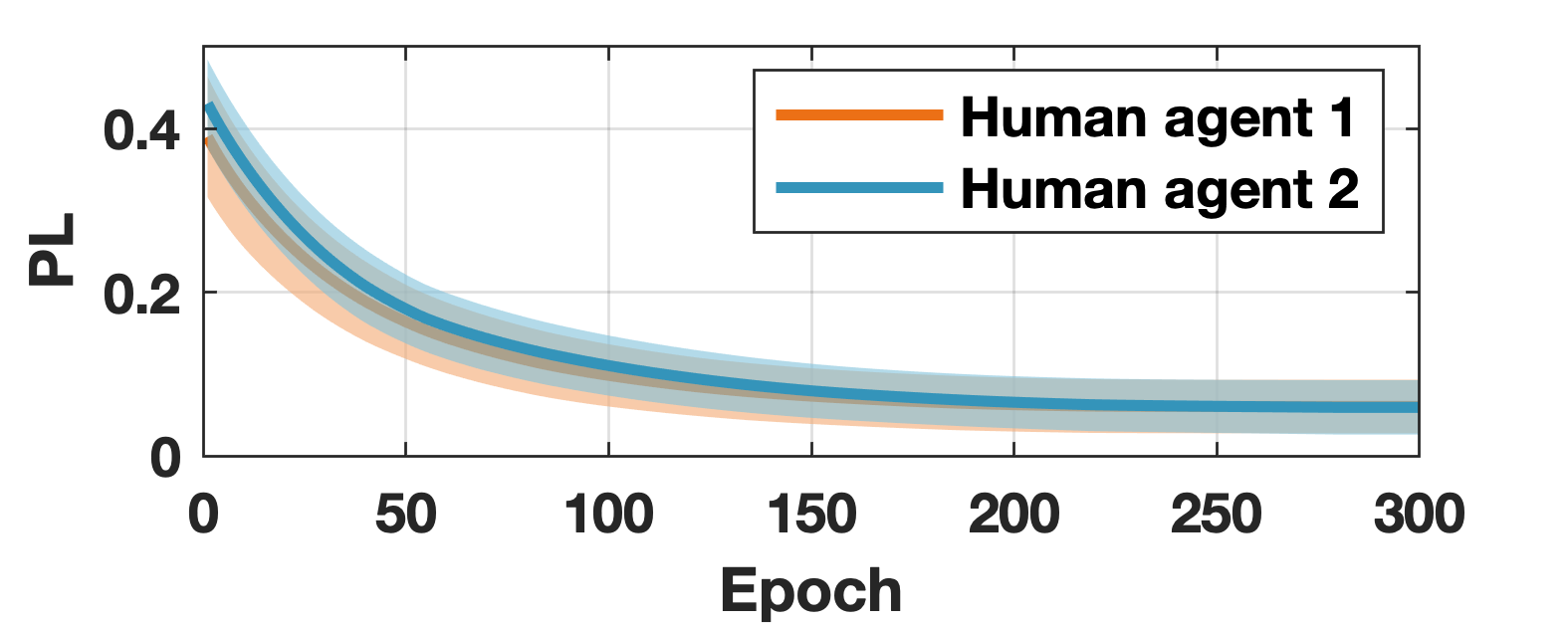,width = 0.7 \linewidth, trim=0.cm 0.0cm 0.5cm 0cm,clip}}
		%%%%%%%%%%%%%%%%%%%%%%
		\put( 20,  75){\epsfig{file=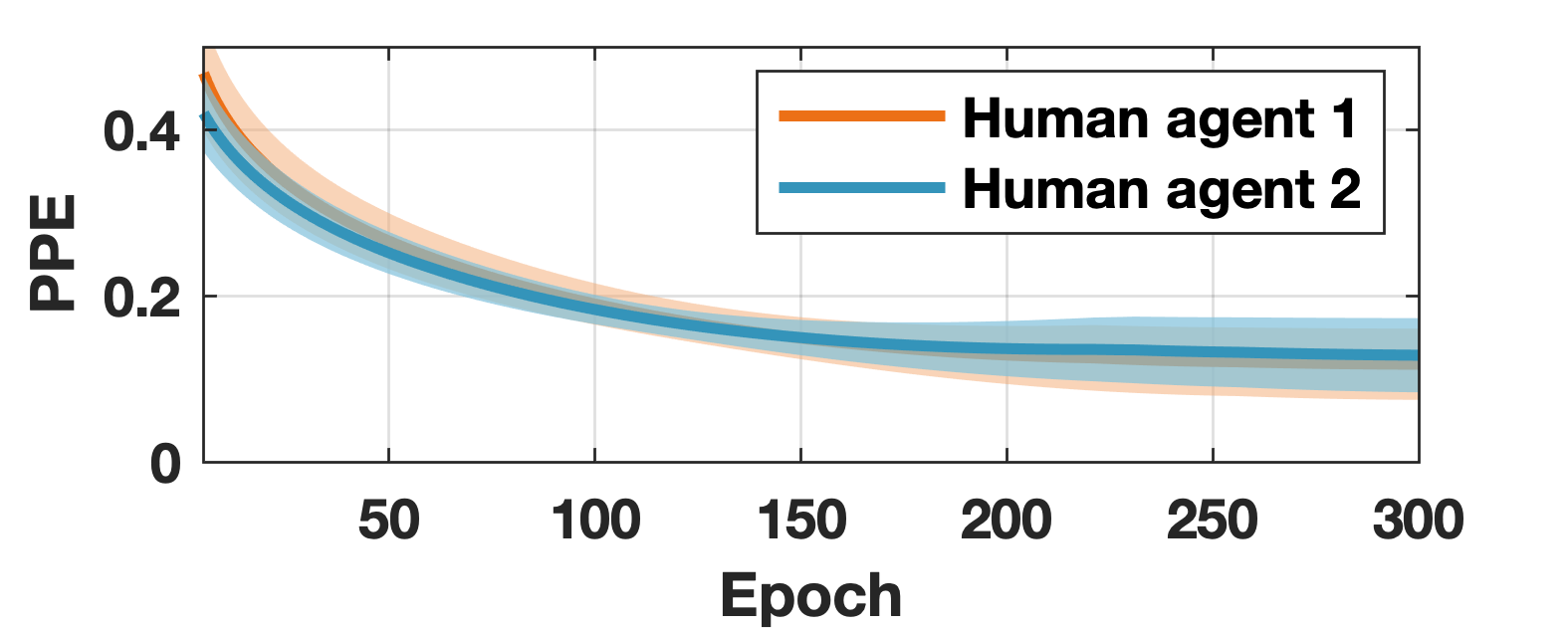,width=0.7\linewidth, trim=0.0cm 0.cm 0.5cm 0cm,clip}} 
		
		\put( 20,  0){\epsfig{file=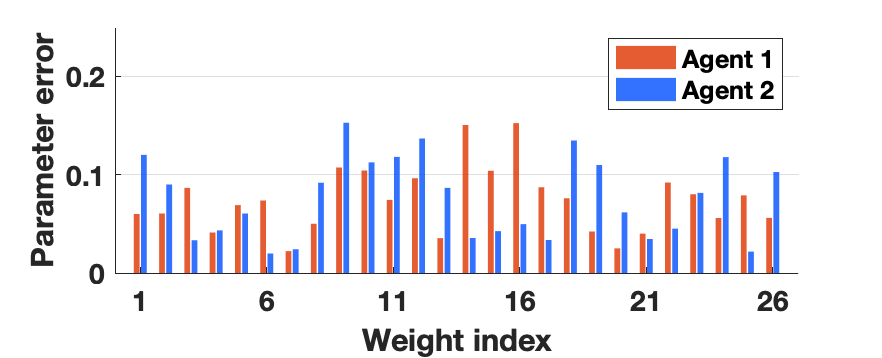,width = 0.75 \linewidth, trim=0.cm 0.0cm 0.5cm 0cm,clip}}  %%%
		\small
		\put(30,210){(a)}
		\put(30,140){(b)}
		\put(30,65){(c)}
		\normalsize
		\end{picture}
	\end{center}
	\caption{(a-b) Averaged PL and PPE w.r.t. training epochs. (c) Average errors of each learned parameter.}
	\label{fig: result_2}
\end{figure}

%\vspace{-0.2cm}
\subsection{Performance Comparison with a Baseline}
In this section, we compare the performance of the proposed inverse reward learning algorithm (BRSMG-IRL) against a baseline inverse reward learning algorithm.

\noindent
\textbf{The baseline IRL algorithm.} The baseline IRL algorithm is a risk-neutral Maximum Entropy IRL algorithm (ME-IRL) without consideration to bounded intelligence, similar to the approach in \cite{sadigh2016planning, sun2018courteous, sun_probabilistic_2018}. Rather than jointly learning both agents' rewards, the baseline runs Maximum Entropy IRL from each agent's perspective separately. In each ego agent' IRL formulation, the interaction is formulated as an open-loop leader-follower game in which the opponent's ground truth trajectory is assumed to be known, making the ego agent a follower to its opponent during learning.

\begin{figure}[ht]
\begin{center}
\begin{picture}(300, 80)

\put(-5,0){\epsfig{file=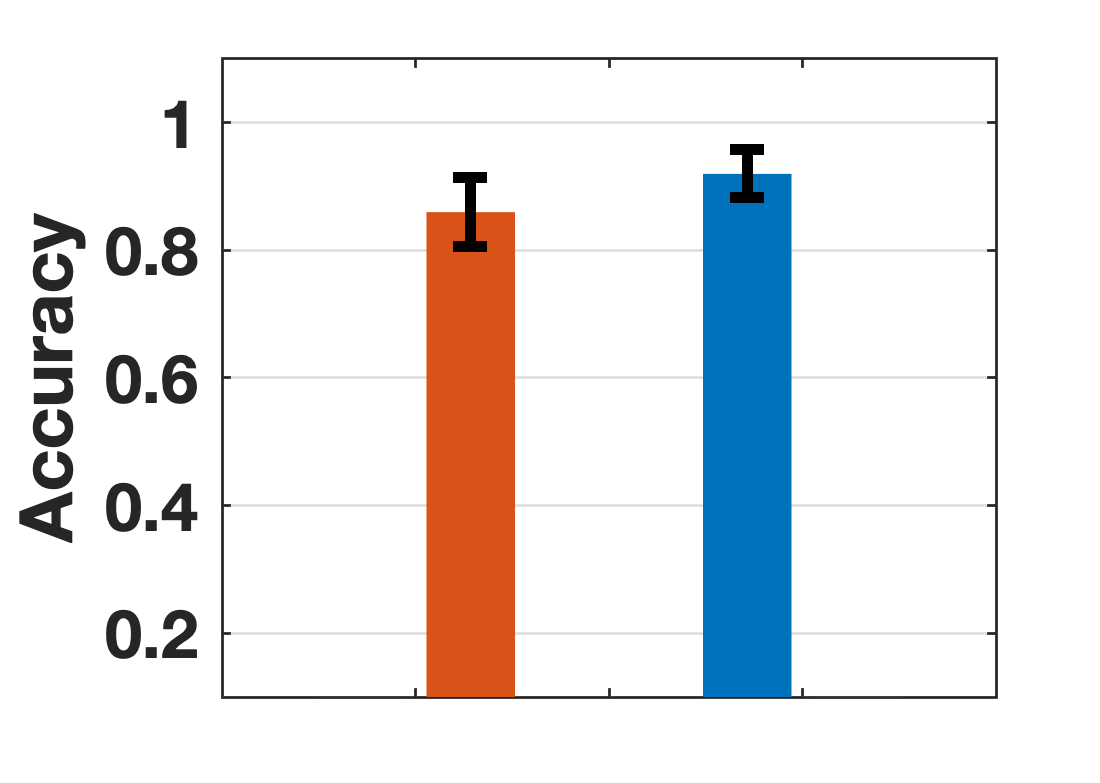,width = 0.45 \linewidth, trim=0.cm 0.0cm 0.5cm 0.5cm,clip}}  %%%

\put(90, 0){\epsfig{file=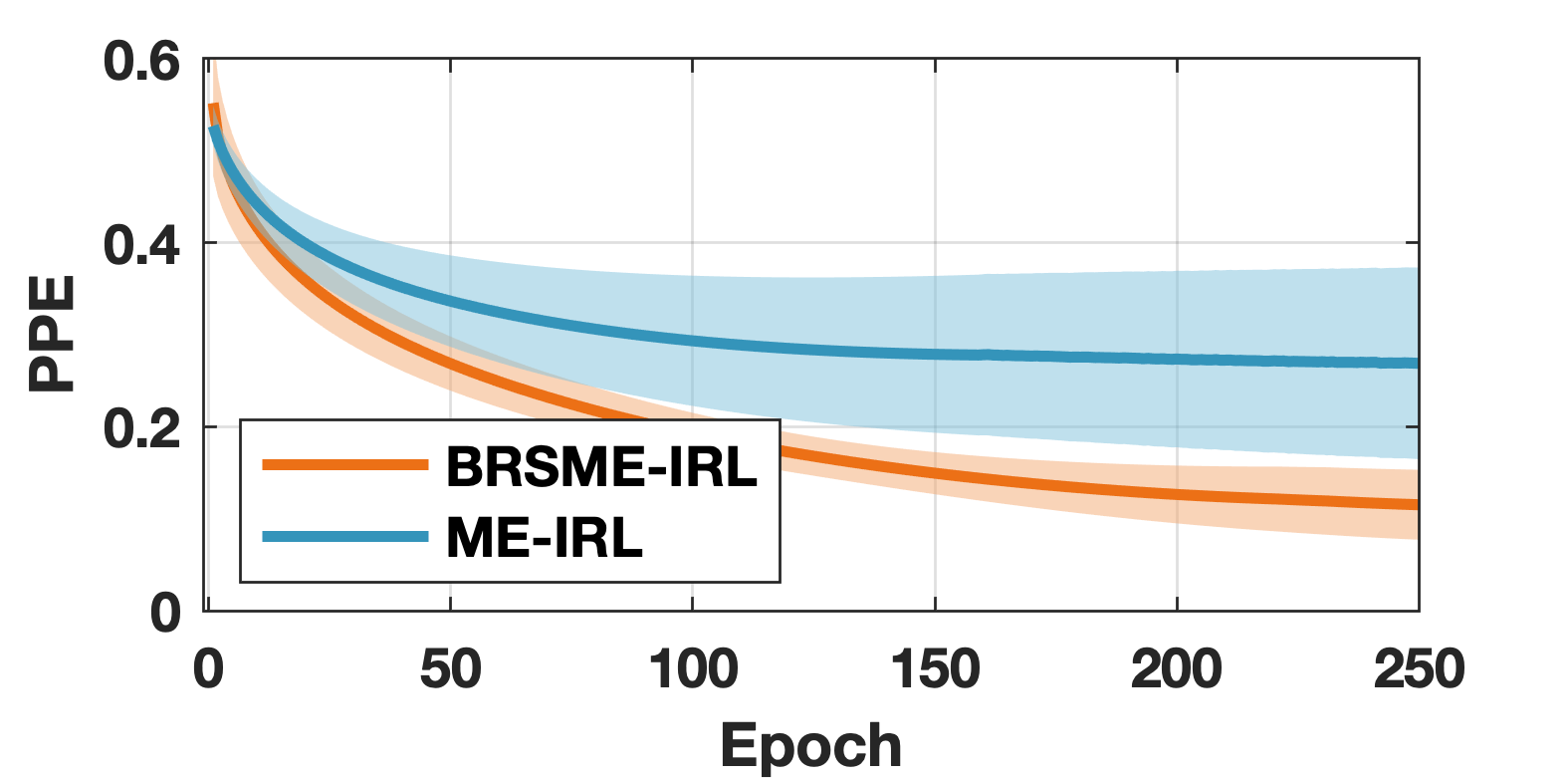,width=0.65 \linewidth, trim=0.0cm 0.0cm 0.5cm 0cm,clip}}

\small
\put(20,70){(a)}
\put(115,70){(b)}
\normalsize
\end{picture}
\end{center}
%\vspace{-0.3cm}
\caption{ (a): Intelligence level identification accuracy (orange: orange agent; blue: blue agent. (b): Reward learning comparison between our method and a baseline Maximum entropy IRL algorithm.}
\label{fig: MEIRL_comparesion}
\end{figure}

\noindent
\textbf{Metrics.} In addition to PPE and PL, we also compare the learned rewards with the ground truth rewards using two types of statistical correlations: 1) Pearson’s correlation coefficient (PCC) and 2) Spearman’s rank correlation coefficient (SCC). PCC characterizes the linear correlation between the ground truth rewards and the recovered rewards (higher PCC represents higher linear correlations). SCC characterizes the strength and direction of the monotonic relationship between the ground truth rewards and the recovered rewards (higher SCC represents stronger monotonic relationships).

\noindent
\textbf{Results.} The performance comparison between the proposed approach and the baseline is shown in the right plot of \cref{fig: MEIRL_comparesion}. We can see that the proposed method can recover more accurate reward values compared to the baseline. This is because the baseline fails to capture the structural biases caused by agents' risk sensitivity and bounded intelligence. Moreover, \cref{tab: correlation} indicates that the reward values recovered by the proposed method have a higher linear correlation and stronger monotonic relationship to the ground-truth reward values. 

\begin{table}[t]
\centering
\begin{tabular}{c|c c}
\toprule
Algorithm & ME-IRL & BRSMG-IRL\\
\hline
SCC A1 & 0.529 & \textbf{0.824}\\

SCC A2 & 0.471 & \textbf{0.763}\\

\hline
Average SCC & 0.371 & \textbf{0.794}\\

\hline
PCC A1 & 0.615 & \textbf{0.865}\\

PCC A2 & 0.462 & \textbf{0.893}\\
\hline
Average PCC & 0.538 & \textbf{0.879}\\
\bottomrule
\end{tabular}
\caption{Statistical correlations between the learned reward functions and the ground-truth rewards}
\label{tab: correlation}
\end{table}

\section{Conclusion}

Drawing on iterative reasoning models and cumulative prospect theory, we proposed a new game-theoretic framework (BRSMG) that captures two aspects of realistic human behaviors: bounded intelligence and risk-sensitivity. We provided general solutions to both the forward policy design problem and the inverse reward learning problem with theoretical analysis and simulation verification. Our future work will focus on using the proposed framework for practical applications such as learning human driver reward functions from naturalistic driving data.

\section*{Acknowledgements}
We thank Ruichao Jiang for helpful discussion and feedback.

%\small
\bibliography{Ref}
\clearpage

% \appendix
% %\section{Supplementary Material}

% % Comment the following to lines for formal version
% % Attach the supplementary in the paper, and reset the equation number
% \setcounter{equation}{0}
% \input{supplementary_attach}
\end{document}